\documentclass[conference]{IEEEtran}
\ifCLASSINFOpdf
   \usepackage[pdftex]{graphicx}
\else
   \usepackage[dvips]{graphicx}
\fi

\usepackage[cmex10]{amsmath}
\usepackage{amssymb}
\usepackage{amsthm}
\usepackage{mathtools}
\usepackage[hidelinks]{hyperref}
\usepackage{cite}
\usepackage{soul}
\setstcolor{red}
\usepackage{lipsum}
\usepackage{xargs}
\usepackage{booktabs}
\usepackage{multirow}
\usepackage{algorithm}
\usepackage{algpseudocode}
\usepackage{tabularx}
\usepackage[normalem]{ulem}
\useunder{\uline}{\ul}{}
\usepackage[table,xcdraw]{xcolor}
\usepackage{cite}
\usepackage[caption=false,font=footnotesize]{subfig}

\newtheorem{theorem}{Theorem}
\newtheorem{definition}{Definition}
\newtheorem{corollary}{Corollary}[theorem]

\hyphenation{op-tical net-works semi-conduc-tor}
\hyphenation{in-di-vi-dual}

\makeatletter
\let\old@ps@headings\ps@headings
\let\old@ps@IEEEtitlepagestyle\ps@IEEEtitlepagestyle
\def\psccfooter#1{%
    \def\ps@headings{%
        \old@ps@headings%
        \def\@oddfoot{\strut\hfill#1\hfill\strut}%
        \def\@evenfoot{\strut\hfill#1\hfill\strut}%
    }%
    \def\ps@IEEEtitlepagestyle{%
        \old@ps@IEEEtitlepagestyle%
        \def\@oddfoot{\strut\hfill#1\hfill\strut}%
        \def\@evenfoot{\strut\hfill#1\hfill\strut}%
    }%
    \ps@headings%
}
\makeatother

\def\thanksto#1{
\begingroup
\def\thefootnote{}
\footnote{
\kern -3pt
\hrule width 0.4\columnwidth height 0.2pt
\kern 5pt
#1
}
\setcounter{footnote}{0}
\endgroup
}
\begin{document}

\title{Stable Training of Probabilistic Models Using the Leave-One-Out Maximum Log-Likelihood Objective}

\author{
\IEEEauthorblockN{Kutay Bölat, Simon H. Tindemans, Peter Palensky}
\IEEEauthorblockA{Department of Electrical Sustainable Energy \\
Technische Universiteit Delft\\
Delft, The Netherlands\\
\{K.Bolat, S.H.Tindemans, P.Palensky\}@tudelft.nl}
}

\maketitle
\global\csname @topnum\endcsname 0
\global\csname @botnum\endcsname 0
\begin{abstract}
    Probabilistic modelling of power systems operation and planning processes depends on data-driven methods, which require sufficiently large datasets. When historical data lacks this, it is desired to model the underlying data generation mechanism as a probability distribution to assess the data quality and generate more data, if needed. Kernel density estimation (KDE) based models are popular choices for this task, but they fail to adapt to data regions with varying densities. In this paper, an \textit{adaptive KDE} model is employed to circumvent this, where each kernel in the model has an individual bandwidth. The \textit{leave-one-out maximum log-likelihood} (LOO-MLL) criterion is proposed to prevent the singular solutions that the regular MLL criterion gives rise to, and it is proven that LOO-MLL prevents these. Relying on this guaranteed robustness, the model is extended by adjustable weights for the kernels. In addition, a \textit{modified expectation-maximization} algorithm is employed to accelerate the optimization speed reliably. The performance of the proposed method and models are exhibited on two power systems datasets using different statistical tests and by comparison with Gaussian mixture models. Results show that the proposed models have promising performance, in addition to their singularity prevention guarantees.
\end{abstract}
\begin{IEEEkeywords}
adaptive kernel density estimation, expectation-maximization, leave-one-out, power systems data, probabilistic models
\end{IEEEkeywords}

\thanksto{\noindent This project has received funding from the European Union's Horizon 2020 research and innovation programme under the Marie Skłodowska-Curie grant agreement No 956433.}

\section{Introduction}
\begin{figure}
    \centering
    \includegraphics[width=\linewidth]{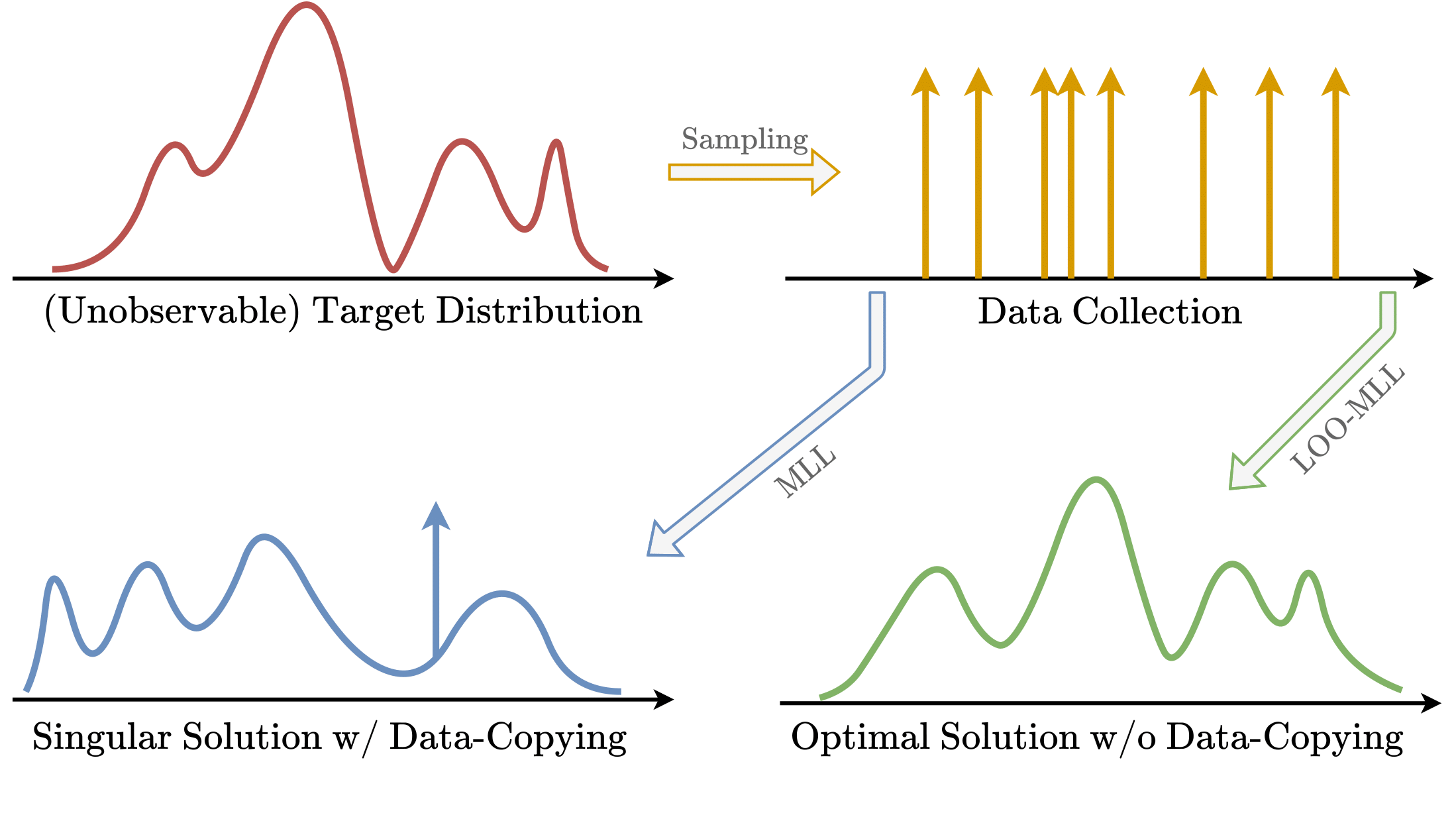}
    \vspace{-0.5cm}
    \caption{Overall objective of probabilistic modelling. Objective functions (MLL, LOO-MLL) can only use the data points. MLL directly targets the dataset, and it copies a random data point (singularity). LOO-MLL avoids this, and the resulting model is similar to the real target distribution.}
    \label{fig:summary}
\end{figure}

Today's power systems exhibit unprecedented levels of variability, especially due to the high penetration of renewable energy systems. Understanding these variabilities is crucial for the operation and planning of power systems because good models that represent these uncertainties can aid the decision-making process of system operators. Data-driven methods are the go-to approaches for such modelling tasks, but the effectiveness of these methods depends on the data quality, including its abundance, representativeness and health (missing values, outliers, etc.). Thus, assessing and improving the quality of real-life data is crucial for effective modelling.

\begin{figure}[h!]
    \centering
    \includegraphics[width=.95\linewidth]{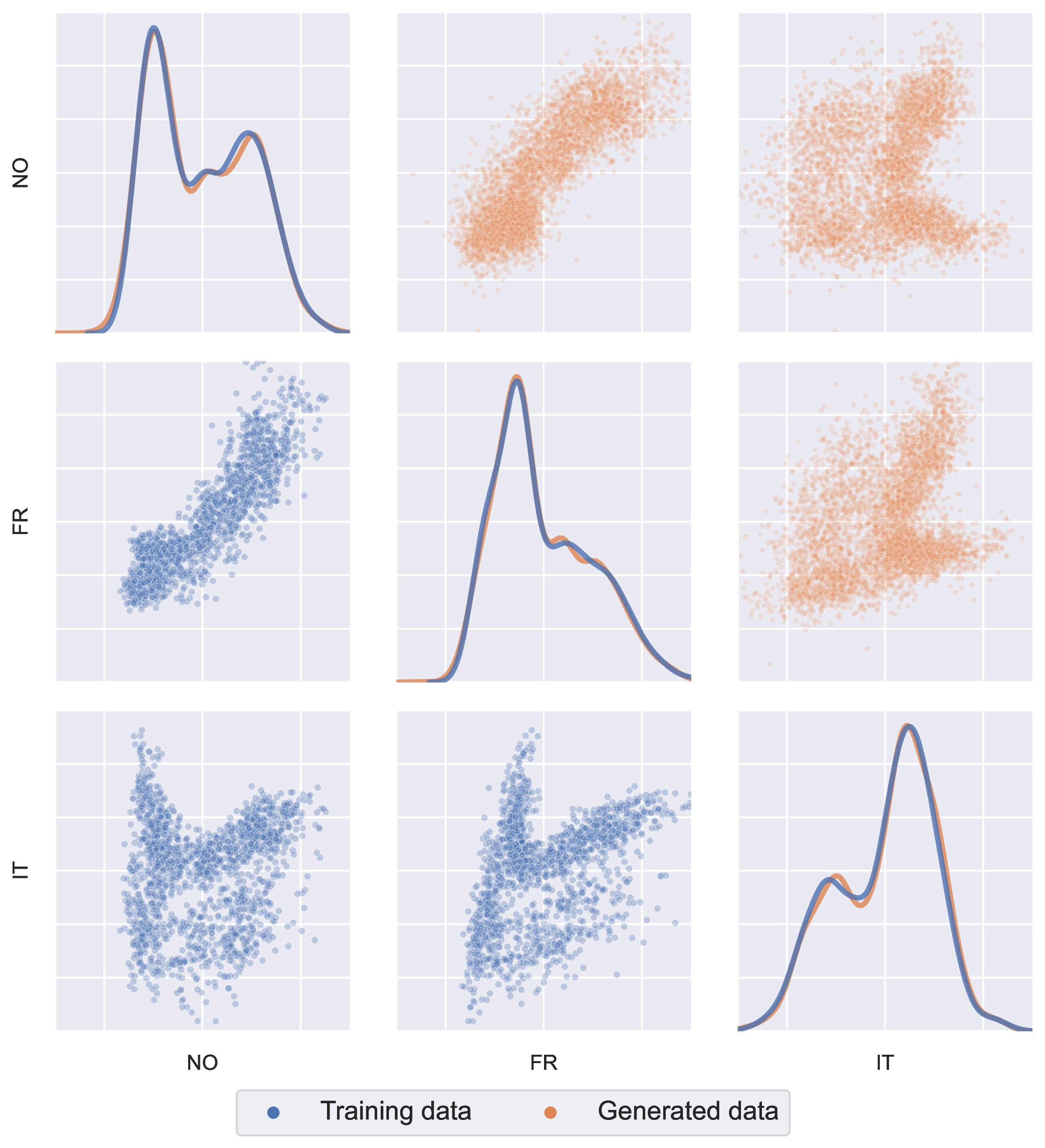}
    \caption{Visual comparison between the training and the generated daily consumption data for three countries (NO, FR, IT) from the Europe dataset (Section \ref{subs:datasets}), illustrating complex dependencies.}
    \label{fig:samples}
\end{figure}

One way to achieve these goals is the \textit{data-driven probabilistic modelling} of the data, which aims to find the closest distribution to the unobservable real-life distribution that generated the data as a probability density function (\textit{pdf}). Fig. \ref{fig:summary} illustrates this process and Fig. \ref{fig:samples} indicates both the complexity of dependencies and the ability to use such a model to generate more data. These can aid various data-driven applications such as security analysis \cite{konstantelos2018using}, and anomaly detection \cite{wang2023improved}.

There is a wide variety of methods for the density estimation problem such as copulas\cite{konstantelos2018using}, Gaussian mixture models (GMMs)\cite{murphy2012machine}, and variational autoencoders\cite{kingma2013auto}. One of the most common among these is kernel density estimation (KDE)\cite{murphy2012machine}, which relies on placing kernels centred on data points and averaging them to form a pdf. However, the regular KDE model has an identical bandwidth parameter for all its kernels. This could lead to (1) noisy samples from the probabilistic model and (2) suboptimal estimation of the pdf in low-density regions where the data is scarce. This challenge can be overcome by assigning individual bandwidths for each data point with respect to their relative locations among each other \cite{sain1994adaptive, van2017variable}. Thus, this \textit{adaptive} KDE (A-KDE) model requires an optimization objective for its bandwidths to be assigned.

The maximum log-likelihood (MLL) criterion is one of the most well-established objectives for optimizing probabilistic models. Yet, the high flexibility of the adaptive KDE model leads to a phenomenon called \textit{data-copying}\cite{meehan2020non}. This occurs when the A-KDE model is optimized to the extent that at least one bandwidth converges to zero and re-produces the data point as seen in Fig. \ref{fig:summary} (lower left) and results in a \textit{singular} solution that causes the optimization algorithm to stop while the rest of distribution is arbitrarily shaped. Please note that most, if not all, of the highly flexible probabilistic models, such as GMMs and variational autoencoders, are prone to this phenomenon\cite{meehan2020non}. 

Avoiding data-copying can be crucial for applications where the probabilistic modelling is part of a pipeline. For instance, a reliability assessment pipeline can have its probabilistic asset modelling task automated for a smoother decision-making process, or edge computing devices in substations can utilize probabilistic models to automatically update their detection algorithms. These applications necessitate guarantees of non-singular solutions for providing reliable data processing pipelines, especially if the resources (computation power, time, battery, etc.) are limited.

\textbf{Contributions.} In this work, we explore a singularity mitigation strategy called the leave-one-out (LOO) MLL criterion using KDE-based models \cite{van2017variable} in a similar fashion to the jackknife estimation method in statistics \cite{mcintosh2016jackknife}\footnote{The jackknife estimation method systematically resamples the dataset by leaving out one observation at a time to reveal the effect of individual data points on the estimator. It can be used for cross-validation and bias-variance estimation.}. The contributions are as follows.
\begin{itemize}
    \item We prove that KDE-based models converge to singular solutions when the regular MLL criterion is employed.
    \item We introduce the LOO-MLL objective to the KDE-based models and prove that it results in well-behaved and robust solutions.
    \item We propose $\pi$-KDE as a more flexible extension of A-KDE by integrating kernel weights into the model.
    \item We propose a modified expectation-maximization (EM) optimization procedure to exploit the advantages of EM.
    \item We developed a testing procedure to compare the performance of probabilistic models quantitatively. 
\end{itemize}

\section{Problem Definition}

\subsection{Motivating Examples}
We start by giving some motivating examples of probabilistic modelling for power system applications.

\subsubsection{Synthetic data generation} 
Smart meter data holds great potential for power systems operation and planning. Unfortunately, accessing these data is not generally possible due to privacy concerns. Moreover, even if the data is accessible, it might not be abundant enough due to its historic nature. These challenges can be overcome by data-driven modelling of the smart meter data at hand so that new datasets following a similar distribution can be generated by taking -effectively unlimited- samples from the probabilistic model. However, the data-copying phenomenon can hinder the privacy-preservation and expressiveness of the model due to the generation of exact replicas of certain data points in the original dataset.

\subsubsection{Rare-event sampling}
Certain events, such as extreme weather conditions, are crucial for various power systems applications like reliability assessment and predictive maintenance. Data points belonging to these events tend to be scarce, which may undermine the results of the application by creating a bias towards a few specific usual events. Probabilistic modelling can remedy this if the distribution's tail is modelled properly so one can take samples from the tail and enrich the rare-event data with unseen rare events. Yet, copying the rare-event data in the dataset interferes with the generalization of the model in the tail regions and inhibits the enrichment.

\subsection{Preliminaries}

\subsubsection{Maximum Log-likelihood Criterion}

Let $p(\mathbf{x};\theta):\mathbb{R}^d\rightarrow\mathbb{R}_{\geq0}$ represent the (multi-dimensional) pdf that we employ as our parametric distribution model where $\theta = \{\theta_a,\theta_f\}$ is the union of adjustable ($\theta_a$) and fixed ($\theta_f$) model parameters. Using the dataset $\mathcal{X}=\{\mathbf{x}_i\}_{i=1}^N$, we aim to find the best model that captures the underlying data generation process. The most common approach to represent this aim as a mathematical objective is using the MLL criterion,
\begin{equation}\label{eq:mll}
    \theta_a^* = \underset{\theta_a}{\text{argmax}} \frac{1}{N}\sum_i \log p(\mathbf{x}_i;\theta),
\end{equation}
where we hereon call the $\frac{1}{N} \sum_i \log p(\mathbf{x}_i;\theta)$ term as the \textit{total log-likelihood}. Intuitively, this criteria incentivizes the model to cover as many data points as possible in its high-density regions since likelihood is an indicator of expected frequency.

\subsubsection{Kernel Density Estimation}
KDE is a methodology that utilizes the data points in the dataset to parameterize the probabilistic model. This is accomplished by placing \textit{identical} kernels centred on data points and averaging them. For conciseness, we focus on Gaussian kernels in this work. Thus, the pdf of the KDE-based model can be written as
\begin{equation}
    p_\text{KDE}(\mathbf{x};\theta) = \frac{1}{N}\sum_j\mathcal{N}(\mathbf{x};\mu_j=\mathbf{x}_j,\Sigma_j=\sigma^2\mathbf{I})
\end{equation}
which essentially is a mixture distribution model with uniform weights. Conventionally, the parameters of the KDE model are fixed, i.e. $\theta_a=\varnothing$, and the \textit{bandwidth} parameter, $\sigma>0$, is the same for all kernels. As a result, the applicability of KDE-based models to high-dimensional datasets is limited due to decreasing locality with increasing dimensionality \cite{sain1994adaptive,van2017variable}.

\subsection{Adaptive Kernel Density Estimation}

The aforementioned drawbacks of KDE models motivate us to employ \textit{individual} bandwidths for each kernel to have the flexibility of adapting the coverage of the model locally. This adaptive KDE (A-KDE) model is defined as
\begin{equation}
    p_\text{A-KDE}(\mathbf{x};\theta) = \frac{1}{N}\sum_j\mathcal{N}(\mathbf{x};\mu_j=\mathbf{x}_j,\Sigma_j=\sigma_j^2\mathbf{I}).
\end{equation}
where $\sigma_j>0, ~ \forall j$. Additionally, we let the bandwidths be adjustable, i.e. $\theta_a=\{\sigma_j\}^N_{j=1}$. 

\subsection{Data-copying as a Singularity}
The additional flexibility and adjustability that come with A-KDE encourage us to optimize $\theta_a$ according to the MLL criterion in \eqref{eq:mll}. Unfortunately, the direct employment of this criterion as an objective function results in one or more bandwidth parameters converging to zero (\textit{bandwidth-collapse}). Since a kernel with zero bandwidth contains no uncertainty and precisely represents the data point, we call this phenomenon \textit{data-copying}.

\begin{definition}[Data-copying] 
    An A-KDE model copies a data point $\mathbf{x}_{j'}\in\mathcal{X}$ when $\sigma_{j'} \rightarrow 0^+$. Thus, the data-copying phenomenon occurs when $ \exists j: \sigma_j \rightarrow 0^+$.
\end{definition}

\begin{theorem}\label{thm:data_copy}
    An A-KDE model optimized by MLL objective copies at least one data point if and only if the total log-likelihood goes to infinity, i.e.
    \begin{equation*}
        \exists j: \sigma_j \rightarrow 0^+ \Longleftrightarrow \sum_i \log p_\text{A-KDE}(\mathbf{x}_i;\theta) \rightarrow \infty.
    \end{equation*}
\end{theorem}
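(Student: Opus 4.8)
The plan is to prove the two implications separately, exploiting the explicit Gaussian form of the kernels together with the fact that $p_\text{A-KDE}$ is a sum of nonnegative terms. Two elementary facts drive everything. First, a Gaussian kernel attains its maximum at its own centre, so $\mathcal{N}(\mathbf{x};\mathbf{x}_j,\sigma_j^2\mathbf{I}) \leq (2\pi\sigma_j^2)^{-d/2}$ with equality at $\mathbf{x}=\mathbf{x}_j$. Second, because every summand is nonnegative, each point's density is bounded below by its own self-kernel: $p_\text{A-KDE}(\mathbf{x}_i;\theta) \geq \tfrac{1}{N}\mathcal{N}(\mathbf{x}_i;\mathbf{x}_i,\sigma_i^2\mathbf{I}) = \tfrac{1}{N}(2\pi\sigma_i^2)^{-d/2}$.

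For the forward direction ($\Rightarrow$), I would suppose $\sigma_{j'}\to 0^+$ for some index $j'$. Lower-bounding $p_\text{A-KDE}(\mathbf{x}_{j'};\theta)$ by its self-kernel gives $\log p_\text{A-KDE}(\mathbf{x}_{j'};\theta) \geq -\log N - \tfrac{d}{2}\log(2\pi) - d\log\sigma_{j'}$, which diverges to $+\infty$. I would then split the total log-likelihood as $\log p_\text{A-KDE}(\mathbf{x}_{j'};\theta) + \sum_{i\neq j'}\log p_\text{A-KDE}(\mathbf{x}_i;\theta)$ and bound each remaining term below by its own self-kernel; provided the non-collapsing bandwidths stay bounded above, these $N-1$ terms are bounded below, so the diverging self-term dominates and the total log-likelihood tends to $+\infty$.

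For the reverse direction ($\Leftarrow$), I would argue by contraposition: assume no bandwidth collapses, i.e. there is $\epsilon>0$ with $\sigma_j\geq\epsilon$ for every $j$. The upper bound on each kernel then yields $p_\text{A-KDE}(\mathbf{x}_i;\theta) \leq (2\pi\epsilon^2)^{-d/2}$ for every $i$, so $\sum_i \log p_\text{A-KDE}(\mathbf{x}_i;\theta) \leq -\tfrac{Nd}{2}\log(2\pi\epsilon^2)$, a finite bound. Hence the total log-likelihood cannot diverge unless $\inf_j\sigma_j\to 0$; since there are only $N$ bandwidths, a pigeonhole/subsequence argument pins the collapse on a single index $j'$, recovering the data-copying condition $\exists j:\sigma_j\to 0^+$.

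I expect the main obstacle to be the forward direction's edge case rather than the core estimate: if one bandwidth collapses while others are simultaneously allowed to diverge ($\sigma_i\to\infty$), the $N-1$ remaining terms are no longer automatically bounded below, and one must either invoke the optimization context (a maximiser never inflates a bandwidth to $\infty$, since that only depresses the likelihood) or add the mild hypothesis that the non-collapsing bandwidths remain bounded. The limiting statements also need to be phrased carefully in terms of parameter sequences, and one should confirm throughout that ``goes to infinity'' means $+\infty$ specifically, which is guaranteed because the collapsing self-term supplies a $-d\log\sigma_{j'}\to+\infty$ contribution that the self-kernel lower bounds prevent the other terms from cancelling.
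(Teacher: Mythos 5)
Your proof is correct and takes essentially the same approach as the paper's: the forward direction isolates the diverging $-d\log\sigma_{j'}$ self-kernel contribution of the collapsing bandwidth while arguing the remaining terms stay bounded below, and the reverse direction is the identical contraposition showing every kernel is uniformly bounded by a constant once all bandwidths exceed some $\varepsilon>0$. The edge case you flag (non-collapsing bandwidths diverging to infinity) is handled implicitly in the paper by holding the remaining bandwidths at fixed non-zero finite values, so your explicit acknowledgment of it is, if anything, slightly more careful.
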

The proof of the Theorem \ref{thm:data_copy} can be found in App. \ref{app:data_copy_proof}. This theorem implies that data-copying is a property of the global optimizer of the MLL objective, resulting in a \textit{singular} solution for the density estimation problem. 

Intuitively, the MLL objective drives the model to replicate the empirical data distribution by copying all the data points, i.e. $\sigma_j\rightarrow0^+,~\forall j$. Ideally, these bandwidth-collapses continue throughout the optimization until the full replication of the dataset. However, as we have shown, one bandwidth-collapse is enough to take the total log-likelihood to infinity. In practice, numerical optimization algorithms cannot handle infinite values and stop the optimization before the other bandwidths collapse. Thus, besides being singular, we can define the data-copying phenomenon also as an \textit{unstable} solution.

\section{Methodology}
The aforementioned challenges that come with the increased flexibility of A-KDE models require a mitigation mechanism for healthy optimization. Since we want our method to be applicable to the problem without any prior assumptions regarding data, we rule out ad-hoc methods limiting the flexibility of the model, like regularization.

\subsection{Leave-One-Out Maximum Log-Likelihood Objective for Adaptive Kernel Density Estimation} \label{subs:loo_a-kde}
In order to mitigate the bandwidth-collapses, we should look at the root of the problem. Intuitively, it is more rewarding for the kernels to assign higher likelihoods to the data points that they centred on (\textit{self-contribution}), which drives these kernels to ignore the surrounding data points. This can also be seen in the data-copying proof in App. \ref{app:data_copy_proof}. 

A natural solution to this problem would be to modify the MLL objective in a way that we leave these self-contributions out of the total log-likelihood, i.e.
\begin{equation}\label{eq:LOO-MLL_objective}
    \{\sigma_j^*\}_j = \underset{\{\sigma_j\}_j}{\text{argmax}} \sum_i \log \sum_{j\neq i} \mathcal{N}(\mathbf{x}_i;\mathbf{x}_j,\sigma^2_j\mathbf{I}).
\end{equation}
We call this the LOO-MLL objective for A-KDE. We guarantee this objective solves the data-copying problem by the following theorem, and its proof can be found in App. \ref{app:loo_wellbehave}.
\begin{theorem}\label{thm:loo_wellbehave}
    Data-copying cannot occur for any optimal solution for the modelling problem with A-KDE if the LOO-MLL objective is used and there are no repeating data points in the dataset.
\end{theorem}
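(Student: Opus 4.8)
The plan is to exploit the single structural difference between the LOO-MLL objective \eqref{eq:LOO-MLL_objective} and the plain MLL criterion: the bandwidth $\sigma_{j'}$ of kernel $j'$ no longer appears in the self-contribution term $i=j'$, because that term sums only over $j\neq i$. In the MLL proof (App. \ref{app:data_copy_proof}) it is precisely this self-term, $\log\mathcal{N}(\mathbf{x}_{j'};\mathbf{x}_{j'},\sigma_{j'}^2\mathbf{I})=\mathrm{const}-d\log\sigma_{j'}$, that diverges to $+\infty$ as $\sigma_{j'}\to 0^+$ and drives data-copying. First I would isolate the dependence of the objective on a single $\sigma_{j'}$, holding all other bandwidths fixed, and write it as a constant (the $i=j'$ term, which does not involve $\sigma_{j'}$) plus $\sum_{i\neq j'}\log\big(\mathcal{N}(\mathbf{x}_i;\mathbf{x}_{j'},\sigma_{j'}^2\mathbf{I})+c_i\big)$, where $c_i\geq 0$ collects the contributions of the remaining kernels and is independent of $\sigma_{j'}$.

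The key fact I would establish is that, because there are no repeating data points, $r_{ij'}:=\lVert\mathbf{x}_i-\mathbf{x}_{j'}\rVert>0$ for every $i\neq j'$, so each off-centre Gaussian $\mathcal{N}(\mathbf{x}_i;\mathbf{x}_{j'},\sigma_{j'}^2\mathbf{I})$ now \emph{vanishes} as $\sigma_{j'}\to 0^+$ instead of diverging, since the exponential factor $\exp(-r_{ij'}^2/2\sigma_{j'}^2)$ dominates the $\sigma_{j'}^{-d}$ normalisation. A short computation of $\tfrac{d}{d\sigma_{j'}}\log\mathcal{N}(\mathbf{x}_i;\mathbf{x}_{j'},\sigma_{j'}^2\mathbf{I})=(r_{ij'}^2-d\,\sigma_{j'}^2)/\sigma_{j'}^3$ shows this Gaussian is strictly increasing on $(0,r_{ij'}/\sqrt{d})$. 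Hence, for sufficiently small $\sigma_{j'}$, every summand for $i\neq j'$ is strictly increasing in $\sigma_{j'}$ while the $i=j'$ term is constant, so the whole objective is strictly increasing in $\sigma_{j'}$ near the origin.

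With this in hand I would argue by contradiction: if an optimal solution had $\sigma_{j'}\to 0^+$ for some $j'$, then increasing that single bandwidth slightly would strictly raise the objective, contradicting optimality. The same argument applies to every kernel, so no optimal solution can drive any bandwidth to zero, i.e. data-copying cannot occur. As a complementary sanity check I would note that the objective is in fact bounded above, since each inner sum is at most $\sum_{j\neq i}\max_{\sigma}\mathcal{N}(\mathbf{x}_i;\mathbf{x}_j,\sigma^2\mathbf{I})<\infty$; this rules out the $+\infty$ behaviour of Theorem \ref{thm:data_copy} and guarantees that the maximiser lives in the interior of $(0,\infty)^N$.

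I expect the main obstacle to be a feasibility/finiteness subtlety rather than the core estimate: to compare the objective at $\sigma_{j'}\to 0^+$ with nearby points I must ensure each leave-one-out sum $\sum_{j\neq i}\mathcal{N}(\mathbf{x}_i;\mathbf{x}_j,\sigma_j^2\mathbf{I})$ stays strictly positive, so that the logarithms remain finite and the $c_i$ are well defined. This holds at any candidate optimum because at least one other kernel must retain a positive bandwidth; I would make this precise and handle the degenerate small-$N$ cases, where letting $\sigma_{j'}\to 0^+$ instead sends the objective to $-\infty$, which only strengthens the conclusion. Finally, since the feasible set $(0,\infty)^N$ is open, I would phrase the conclusion in terms of maximising sequences being bounded away from the boundary rather than a boundary value being attained.
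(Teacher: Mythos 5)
Your proposal is correct and follows essentially the same route as the paper's proof in App.~B: your per-term derivative $(r_{ij'}^2 - d\sigma_{j'}^2)/\sigma_{j'}^3$ is exactly the bracketed factor in the paper's gradient expression $g_j = \sum_{i\neq j}\left(\Delta_{ij}^2/\sigma_j^3 - d/\sigma_j\right)w_{ij}$, and both arguments conclude that the LOO objective is strictly increasing in any bandwidth below the positive threshold $\min_{i\neq j}\Delta_{ij}/\sqrt{d}$, so no optimal (or boundary/limiting) solution can have a collapsing bandwidth. The only cosmetic difference is that you establish positivity of the derivative term by term, whereas the paper aggregates it through the softmax weights $w_{ij}$ and reaches a contradiction with the negated first-order optimality condition.
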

Note that the unique data points assumption holds almost surely (with probability one) for non-discrete data domains, so that the data-copying is not a problem for datasets drawn from continuous distributions. In addition, we also show that the instability problem that we encounter in the regular MLL objective does not occur when we employ the LOO-MLL objective.
\begin{theorem}\label{thm:loo_stability}
    The total log-likelihood in \eqref{eq:LOO-MLL_objective} is always bounded from above if there are no repeating data points in the dataset.
\end{theorem}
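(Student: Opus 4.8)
The plan is to establish the upper bound term-by-term, exploiting the fact that every Gaussian kernel appearing in \eqref{eq:LOO-MLL_objective} is evaluated at a point distinct from its own centre. First I would fix a pair of indices $i\neq j$ and observe that, because the dataset contains no repeating points, the squared distance $r_{ij}:=\|\mathbf{x}_i-\mathbf{x}_j\|^2$ is strictly positive. Writing the kernel explicitly as
\begin{equation*}
    \mathcal{N}(\mathbf{x}_i;\mathbf{x}_j,\sigma^2\mathbf{I})=\frac{1}{(2\pi\sigma^2)^{d/2}}\exp\!\left(-\frac{r_{ij}}{2\sigma^2}\right),
\end{equation*}
I would treat this as a function of the single variable $\sigma>0$ and show that it is bounded above by a finite constant $M_{ij}$ depending only on $r_{ij}$ and the dimension $d$.

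The second step is to verify this single-kernel bound. I would differentiate the logarithm of the kernel with respect to $\sigma^2$, locate the unique stationary point at $\sigma^2=r_{ij}/d$, and evaluate the kernel there to obtain an explicit finite maximiser $M_{ij}$. More cheaply, one can note that the kernel is continuous on $(0,\infty)$ and vanishes at both endpoints: as $\sigma\to\infty$ the normalising prefactor drives it to zero, while as $\sigma\to0^+$ the exponential decay $\exp(-r_{ij}/2\sigma^2)$ overwhelms the polynomial blow-up $(2\pi\sigma^2)^{-d/2}$ precisely because $r_{ij}>0$; hence the supremum is attained and finite. This latter endpoint is the only place where the distinctness hypothesis is essential, and it is exactly the mechanism that fails in the self-contribution terms removed by the leave-one-out construction.

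The final step is to assemble the global bound. For any admissible choice of bandwidths $\{\sigma_j\}_j$, each inner summand satisfies $\mathcal{N}(\mathbf{x}_i;\mathbf{x}_j,\sigma_j^2\mathbf{I})\leq M_{ij}$, so $\sum_{j\neq i}\mathcal{N}(\mathbf{x}_i;\mathbf{x}_j,\sigma_j^2\mathbf{I})\leq\sum_{j\neq i}M_{ij}$. Applying the monotonicity of the logarithm and summing over $i$ yields
\begin{equation*}
    \sum_i\log\sum_{j\neq i}\mathcal{N}(\mathbf{x}_i;\mathbf{x}_j,\sigma_j^2\mathbf{I})\;\leq\;\sum_i\log\sum_{j\neq i}M_{ij},
\end{equation*}
whose right-hand side is a finite constant independent of the bandwidths. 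Since this holds uniformly over all $\{\sigma_j\}_j$, the total log-likelihood is bounded from above, as claimed.

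I do not anticipate a serious obstacle: the argument is essentially a bandwidth-uniform boundedness check rather than a delicate optimisation. The only subtlety worth stating carefully is the $\sigma\to0^+$ limit in the second step, where one must confirm that the exponential factor dominates the diverging normalisation constant; this is where $r_{ij}>0$ — equivalently, the no-repeating-points assumption — does all the work.
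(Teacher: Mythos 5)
Your proof is correct and follows essentially the same route as the paper's: both bound each leave-one-out kernel uniformly over all bandwidths (maximized at $\sigma^2 = \|\mathbf{x}_i-\mathbf{x}_j\|^2/d$, which is finite precisely because the evaluation point differs from the centre), then pass the bound through the sum and the logarithm. The only cosmetic difference is that you use per-pair constants $M_{ij}$ while the paper collapses them into a single constant $c$ via the minimum pairwise distance $m$; both yield the same finite upper bound.
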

\begin{proof}
    Let us define $m \vcentcolon=\min_{\{i,j:i\neq j\}}(\lVert \mathbf{x}_i - \mathbf{x}_j \rVert)$. Thanks to the no-replica data point assumption we have $m>0$, and we can derive
    \begin{equation}
    \begin{split}
        &\mathcal{N}(\mathbf{x}_i;\mathbf{x}_{j\neq i},\sigma^2_j\mathbf{I}) \propto  \sigma^{-d}_j\exp(-\frac{\lVert \mathbf{x}_i - \mathbf{x}_{j\neq i} \rVert^2}{2\sigma^2_j}) \\
        & \leq \frac{1}{\sigma^d_j}\exp(\frac{-m^2}{2\sigma^2_j}) \leq \frac{d^{\frac{d}{2}}}{\exp(\frac{d}{2})}m^{-d} = c < \infty
    \end{split}
    \end{equation}
    As a result, 
    \begin{equation}
        \sum_i \log \sum_{j\neq i} \mathcal{N}(\mathbf{x}_i;\mathbf{x}_j,\sigma^2_j\mathbf{I}) < N \log ((N-1)c) < \infty
    \end{equation}
    which concludes the proof.
\end{proof}
Consequently, employing LOO-MLL objective to non-discrete datasets almost surely guarantees the prevention of A-KDE's drawbacks, namely data-copying and instability.

\subsection{$\pi$-Kernel Density Estimation}\label{subs:loo_pi-kde}
As we mentioned before, the KDE-based models are essentially mixture models, and A-KDE models are no exceptions. We can use this resemblance to extend their flexibility by introducing individual weights to each kernel as
\begin{equation}
    p_{\pi\text{-KDE}}(\mathbf{x};\theta)=\sum_j \pi_j \mathcal{N}(\mathbf{x};\mathbf{x}_j,\sigma^2_j\mathbf{I})
\end{equation}
where $\pi_j>0$, $\sum_j \pi_j =1$ and $\theta_a=\{\pi_j,\sigma_j\}_j$. Note that A-KDE is a special form of $\pi$-KDE with $\pi_j=\frac{1}{N},\forall j$. Accordingly, the LOO-MLL objective for $\pi$-KDE models can be written in a similar manner too:
\begin{equation}
    \{\pi^*_j,\sigma_j^*\}_j = \underset{\{\pi_j,\sigma_j\}_j}{\text{argmax}} \sum_i \log \sum_{j\neq i} \pi_j \mathcal{N}(\mathbf{x}_i;\mathbf{x}_j,\sigma^2_j\mathbf{I}).
\end{equation}
This integration of the kernel weights introduces greater flexibility thanks to the higher number of parameters. Additionally, we hypothesize that the model's sensitivity to outliers is reduced by employing this method since, intuitively, the kernels belonging to these outlier data points tend to have smaller weights. The analysis of this claim is outside of the scope of this study, and we leave it for future work.

\begin{corollary}
    The theorems \ref{thm:data_copy}-\ref{thm:loo_stability} apply to the $\pi$-KDE model and its related LOO-MLL objective.
\end{corollary}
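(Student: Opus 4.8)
The plan is to exploit the fact that $\pi$-KDE differs from A-KDE only by replacing the uniform mixture weight $\tfrac{1}{N}$ with adjustable weights $\pi_j$ constrained to the open simplex ($\pi_j>0$, $\sum_j\pi_j=1$). Since every argument behind Theorems \ref{thm:data_copy}--\ref{thm:loo_stability} operates on the individual mixture terms, I would re-run each proof under the substitution $\tfrac{1}{N}\mathcal{N}(\cdot)\mapsto\pi_j\mathcal{N}(\cdot)$ and verify that the crucial inequalities survive. The structural observation that makes this work is the two-sided bound $0<\pi_j\le1$: the upper bound preserves every boundedness estimate, while the strict positivity preserves every divergence estimate.

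I would begin with Theorem \ref{thm:loo_stability}, whose proof is shortest. Because $\pi_j\le1$, each LOO term obeys $\pi_j\mathcal{N}(\mathbf{x}_i;\mathbf{x}_{j\neq i},\sigma_j^2\mathbf{I})\le\mathcal{N}(\mathbf{x}_i;\mathbf{x}_{j\neq i},\sigma_j^2\mathbf{I})\le c$, so the displayed chain of inequalities transfers directly; in fact $\sum_{j\neq i}\pi_j\le1$ tightens the bound to $N\log c$. For Theorem \ref{thm:loo_wellbehave}, the driving mechanism is that as $\sigma_{j'}\to0^+$ the off-diagonal kernel $\mathcal{N}(\mathbf{x}_i;\mathbf{x}_{j'},\sigma_{j'}^2\mathbf{I})\to0$ for every $i\neq j'$ (using $\lVert\mathbf{x}_i-\mathbf{x}_{j'}\rVert\ge m>0$); multiplying by the fixed positive factor $\pi_{j'}$ leaves this limit unchanged, so a collapsing kernel still contributes nothing to any LOO sum and is strictly dominated by a kernel with finite bandwidth, ruling out data-copying at any optimum. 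For Theorem \ref{thm:data_copy} (the MLL, non-LOO setting), I would reproduce the appendix argument: the self-contribution $\pi_{j'}\mathcal{N}(\mathbf{x}_{j'};\mathbf{x}_{j'},\sigma_{j'}^2\mathbf{I})\propto\pi_{j'}\sigma_{j'}^{-d}$ diverges as $\sigma_{j'}\to0^+$ precisely because $\pi_{j'}>0$, giving the forward implication, while the reverse follows since every off-diagonal term is bounded by $c$, so the only route to an infinite total log-likelihood is a self term blowing up, i.e. some $\sigma_{j'}\to0^+$.

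The main obstacle I anticipate is the extra degree of freedom from the weights: unlike A-KDE, here $\pi_j$ is itself an optimization variable that can drift toward the boundary of the simplex. I would need to confirm that letting $\pi_{j'}\to0$, possibly jointly with $\sigma_{j'}\to0$, introduces no new singular or unbounded behaviour. This is exactly where the bound $\pi_j\le1$ earns its keep: a vanishing weight can only shrink a kernel's influence, never amplify it, so no term can diverge faster than in the uniform-weight case. The point demanding genuine care is the reverse direction of Theorem \ref{thm:data_copy}, where I must rule out the total log-likelihood diverging through the weights alone while all bandwidths stay bounded away from zero; this again follows from $\pi_j\le1$ forcing each self term $\pi_j\sigma_j^{-d}$ to remain finite. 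Once these boundary cases are dispatched, the corollary follows by direct transcription of the three proofs.
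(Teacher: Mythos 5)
Your proposal takes essentially the same route as the paper: the paper's own justification is literally that the corollary ``can be proved directly by including the weights to the derivations in the corresponding proofs,'' and your substitution $\tfrac{1}{N}\mathcal{N}(\cdot)\mapsto\pi_j\mathcal{N}(\cdot)$ together with the two-sided bound $0<\pi_j\le 1$ is exactly that, worked out in more detail than the paper provides. One inner step does differ. For Theorem \ref{thm:loo_wellbehave} you argue by strict domination (a collapsed kernel contributes nothing to any LOO sum, so giving it a finite bandwidth strictly improves the objective), whereas the paper's appendix proof is a first-order optimality contradiction; in that argument the weights pass through harmlessly because they enter only through the responsibilities $w_{ij}=\exp(f_{ij})/\sum_{k\neq i}\exp(f_{ik})$ (with $\log\pi_k$ absorbed into $f_{ik}$), and these cancel in the key ratio bound $\sum_{i\neq j}\Delta_{ij}^2 w_{ij}\geq \bigl(\min_{i\neq j}\Delta_{ij}^2\bigr)\sum_{i\neq j}w_{ij}$, so the contradiction is verbatim the same. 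Your domination sketch instead leans on ``the fixed positive factor $\pi_{j'}$,'' and the joint-limit obstacle you flag ($\pi_{j'}\to 0$ together with $\sigma_{j'}\to 0$) is genuinely the delicate point: your resolution via $\pi_j\le 1$ secures the boundedness statements (Theorem \ref{thm:loo_stability} and the reverse direction of Theorem \ref{thm:data_copy}), but it does not by itself show that a kernel whose weight and bandwidth vanish together cannot sit at an optimum, nor that the forward direction of Theorem \ref{thm:data_copy} survives weights vanishing faster than $\sigma_{j'}^d$ (in which case $\pi_{j'}\sigma_{j'}^{-d}$ stays finite). In fairness, the paper's two-sentence proof silently assumes the weights stay bounded away from zero in exactly the same way, so your write-up is no less rigorous than the target; it would simply be strengthened by stating that assumption, or by adopting the responsibility-cancellation observation, which handles the weights uniformly across all three theorems.
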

This corollary can be proved directly by including the weights to the derivations in the corresponding proofs. Intuitively, the $\pi$-KDE model employs a convex combination of the likelihoods in the A-KDE model and adapting this convexity to the given proofs does not change the results. 

\subsection{Modified Expectation-Maximization Algorithm}
 Until now, no specific optimization algorithm has been indicated to find the optimal solutions for the aforementioned LOO-MLL problems in \ref{subs:loo_a-kde} and \ref{subs:loo_pi-kde}. Because of their continuous nature, a wide variety of off-the-shelf automatic differentiation-based optimizers, such as Adam\cite{kingma2014adam} are applicable to our problem. These optimizers provide a seamless first-order gradient-based optimization for a given model and objective function.
 
 On the other hand, the A-KDE/$\pi$-KDE models are special cases of isotropic Gaussian mixture models where the centres are predetermined and fixed, suggesting that we can employ the EM algorithm\cite{murphy2012machine} for its desirable properties and intuitive implementation. Thus, the expectation and maximization steps of the algorithm can easily be applied to the conventional MLL objective. However, we must modify this EM algorithm according to our LOO-MLL objective to obtain well-behaved and stable solutions. 
 
 Accordingly, we  propose to use the following modified EM algorithm to iteratively maximize the LOO-MLL objective:
 
\begin{itemize}
    \item \textbf{E-step:}
        \begin{equation}
            r_{ij} = \left\{ 
            \begin{array}{cl}
             \frac{\pi_j\mathcal{N}(\mathbf{x}_i;\mathbf{x}_j,\sigma^2_j\mathbf{I})}{\sum_{j' \neq i} \pi_{j'} \mathcal{N}(\mathbf{x}_i;\mathbf{x}_{j'},\sigma^2_{j'}\mathbf{I})}& : \ i \neq j \\
            0 & : \ i=j
            \end{array} \right.
        \end{equation}
    \item \textbf{M-step:}
        \begin{equation}
            \sigma^2_j = \frac{1}{d}\frac{\sum_i r_{ij} \lVert \mathbf{x}_i -\mathbf{x}_j \rVert^2 }{\sum_i r_{ij}}
        \end{equation}
        \begin{equation}
            \pi_j = \frac{1}{N}\sum_i r_{ij}
        \end{equation}
\end{itemize}
The weights are fixed to $\pi_j=\frac{1}{N},\forall j$ for A-KDE. Note that the M-step remains the same as the M-step of the regular EM algorithm, thanks to the assignment of zero responsibilities to the self-kernels, i.e. $r_{ii}=0$. This assignment is a representation of the LOO mechanism in a way that the data points have no effect on the optimization of their self-kernels.

\section{Experimentation}
In order to exhibit the probabilistic modelling capabilities of A-KDE and $\pi$-KDE models, we run a number of experiments. This section describes the datasets, the model comparison strategy and the experiment settings.\footnote{\url{https://github.com/kabolat/leave-one-out_maximum-log-likelihood}.}

\subsection{Datasets} \label{subs:datasets}
We employ the data from ENTSO-E Transparency Platform\footnote{\url{https://transparency.entsoe.eu/}}\footnote{\url{https://data.open-power-system-data.org/time_series/}} as the basis of our datasets. We curated two datasets using this platform\footnote{The datasets can be found in the shared code repository.}, as given below.

\subsubsection{Europe dataset} We used the daily averaged power consumption in MW of 15 European countries\footnote{AT, BE, CH, DE, DK, ES, FI, FR, GB, IE, IT, NL, NO, PT, SE.} between 2015-2020 to build this dataset, which resulted in 2099 data points with 15 features. In other words, each data point corresponds to one day in the given five-year period, and each feature represents the aggregated daily consumption of the corresponding country. Fig. \ref{fig:samples} (blue points) visualizes this dataset using three countries' data, i.e. three data features. 
\subsubsection{Denmark dataset} We used the hourly averaged load and generation (solar, onshore wind and offshore wind) numbers (in MW) from the two bidding zones in Denmark in 2019 to build this dataset, which resulted in 8784 data points with 8 features. Therefore, each data point corresponds to an hour of a given day in 2019.

Please note that both datasets are treated as collections of \textit{i.i.d.} snapshots in time, not as time series. This treatment is relevant for various energy systems applications such as scenario testing for (cross-border) energy market studies and statistical modelling of load levels interconnection capacity planning.

\subsection{Two-Step Model Comparison Strategy}
Here, we introduce our model comparison strategy to assess the performance of the distribution models.
\subsubsection{Sample Comparison Tests}
In order to test the hypothesis of whether two sets of samples are coming from the same distribution or not, two multi-dimensional two-sample statistical tests were used: maximum mean discrepancy (MMD)\cite{gretton2012kernel} and energy\cite{szekely2013energy} tests\footnote{\url{https://github.com/josipd/torch-two-sample}}. These aim to test the hypothesis if the \textit{model samples} are coming from the same distribution as the \textit{test samples}. The test samples consist of the data points that we held out during the training of the models.

\subsubsection{Model Comparison Tests}
Two sample tests are designed to give smaller scores when the null hypothesis (samples are coming from the same distribution) is more likely. However, the test scores themselves are not easy to interpret numerically. In other words, the score alone cannot say if the best model amongst the candidate models is a good model or not. To overcome this, first, we obtained baseline scores by performing two-sample tests that compared the training data with the test data.

However, one drawback of this approach is that we \textit{randomly} split the training and test samples before the optimization. Thus, the obtained baseline score is merely a sample of a complex random process and can be misleading. Since conducting all the optimization procedures for different data splits is infeasible, we propose to use the random subsets of the train, test and generated sample sets to capture this effect as done in \cite{wang2022generating}. This Monte Carlo approach results in \textit{samples of sample comparison scores}, that are used to compare models.

Samples of test scores from the models can be compared to baseline score samples by using different statistics to calculate the difference between these sample distributions. For this, we use three two-sample tests as \textit{model comparison tests}: Kolmogorov-Smirnov (KS) \cite{massey1951kolmogorov}, Cramér–von Mises (CvM)\footnote{scipy.stats (v1.8.1) is used for KS and CvM tests.} \cite{anderson1962distribution}, and simple mean difference ($\Delta$Mean) tests. These result in \textit{model comparison scores} with smaller values indicating better performance.

Algorithm \ref{alg:two-step} contains a pseudo-algorithm of the two-step model comparison procedure. Here, $\mathcal{M}$, $\mathcal{X}^\text{train}$, $\mathcal{X}^\text{test}$, $N^\text{MC}$ and $r$ represents the set of the compared models, training set, test set, the number of Monte Carlo runs and the subsampling ratio, respectively. The operator $\stackrel{\text{n}}{\sim}$ means sampling n data points without replacement. As a result, we have a collection of model comparison scores $\mathcal{S}_{\text{T}^\text{s},\text{T}^\text{m}}^\text{M}$ for all model, sample comparison test and model comparison test triples, e.g. $\mathcal{S}_{\text{MMD},\text{CvM}}^{\pi-\text{KDE}}$.

\begin{algorithm}[t]
\caption{Two-Step Model Comparison Procedure}\label{alg:two-step}
\begin{algorithmic}[1]
\Require $\mathcal{M}$, $\mathcal{X}^\text{train}$, $\mathcal{X}^\text{test}$, $N^\text{MC},N^\text{model}\in\mathbb{N}$, $r\in(0,1)$
\State $\mathcal{T}^\text{S} = \{\text{MMD}, \text{Energy}\}$ \Comment{Sample comparison tests}
\State $n \gets r\times|\mathcal{X}^\text{test}|$ \Comment{\# subsamples}

\ForAll {$\text{T}^\text{s} \in \mathcal{T}^\text{S}$}
    \State $\mathcal{S}_{\text{T}^\text{s}}^\text{base} \leftarrow \{\}$
    \ForAll {$i \in 1,2,\dots,N^\text{MC}$}
        \State $\mathcal{D}^\text{test}\stackrel{\text{n}}{\sim}\mathcal{X}^\text{test}$; $\mathcal{D}^\text{base}\stackrel{\text{n}}{\sim}\mathcal{X}^\text{train}$ \Comment{Subsampling}
        \State $\mathcal{S}_{\text{T}^\text{s}}^\text{base} \leftarrow \mathcal{S}_{\text{T}^\text{s}}^\text{base} \cup \text{T}^\text{s}(\mathcal{D}^\text{test},\mathcal{D}^\text{base}) $ 
        \State \Comment{Collecting the baseline sample scores for test $\text{T}^\text{s}$ }
    \EndFor
\EndFor
\State
\ForAll {$\text{M} \in \mathcal{M}$}
    \State $\mathcal{X}^\text{model} \stackrel{N^\text{model}}{\sim} \text{M}$ \Comment{Taking samples from the model}
    \ForAll {$\text{T}^\text{s} \in \mathcal{T}^\text{S}$}
        \State $\mathcal{S}_{\text{T}^\text{s}}^\text{M} \leftarrow \{\}$
        \ForAll {$i \in 1,2,\dots,N^\text{MC}$}
            \State $\mathcal{D}^\text{test}\stackrel{\text{n}}{\sim}\mathcal{X}^\text{test}$; $\mathcal{D}^\text{model}\stackrel{\text{n}}{\sim}\mathcal{X}^\text{model}$ \Comment{Subsampling}
            \State $\mathcal{S}_{\text{T}^\text{s}}^\text{M} \leftarrow \mathcal{S}_{\text{T}^\text{s}}^\text{M} \cup \text{T}^\text{s}(\mathcal{D}^\text{test},\mathcal{D}^\text{model}) $
            \State \Comment{Collecting the sample scores of M for $\text{T}^\text{s}$}
        \EndFor
    \EndFor
\EndFor
\State
\State $\mathcal{T}^\text{M} = \{\text{KS}, \text{CvM}, \Delta\text{Mean}\}$ \Comment{Model comparison tests}
\ForAll {$(\text{T}^\text{s},\text{T}^\text{m},\text{M})  \in \mathcal{T}^\text{S} \times \mathcal{T}^\text{M} \times \mathcal{M}$}
    \State $\mathcal{S}_{\text{T}^\text{s},\text{T}^\text{m}}^\text{M} \leftarrow  \text{T}^\text{m}(\mathcal{S}_{\text{T}^\text{s}}^\text{base}, \mathcal{S}_{\text{T}^\text{s}}^\text{M}) $
    \State \Comment{Assigning model scores for every $\text{T}^\text{s}$ and M}
\EndFor

\end{algorithmic}
\end{algorithm}

\subsection{Experiment Settings}
\subsubsection{Dataset settings} Both datasets were randomly split into train and test sets with a ratio of 4:1 and normalized using z-score normalization.
\subsubsection{Optimizer settings} Adam was used for the gradient-based optimization. The convergence thresholds were set to $10^{-4}$ for all of the optimization algorithms. 
\subsubsection{Initialization settings} The initial bandwidths were assigned as $0.1$ both for A-KDE and $\pi$-KDE models \footnote{A logarithmic search was conducted for the initial bandwidths, and no value performs the best on all scores. Since the resulting scores are relatively close to each other, 0.1 was chosen as the most representative initial bandwidth value.} and initial weights were assigned as $\frac{1}{N}$ for the $\pi$-KDE model.
\subsubsection{Benchmark model settings} Full covariance GMMs were used as the model of comparison. Their numbers of components were set in a way that the total numbers of parameters were as close as possible to the numbers of parameters of A-KDE and $\pi$-KDE. The GMM models are denoted as GMM$_\text{A}$ and GMM$_\pi$, respectively. Thus, the set of models becomes $\mathcal{M}=\{\text{A-KDE},\pi\text{-KDE},\text{GMM}_\text{A},\text{GMM}_\pi\}$.
\subsubsection{Test settings} The number of samples was set to the size of the training set for all models, i.e. $N^\text{model}=|\mathcal{X}^\text{train}|$. We chose a subsampling rate of $r=0.5$ and set the number of Monte Carlo runs ($N^{\text{MC}}$) to 2000 and 1000 for Europe and Denmark datasets, respectively.
\section{Results and Discussion}

\subsection{Comparison of Training Speed}
First, we compared the speed differences between two optimization algorithms (modified EM and Adam) by measuring the elapsed times for convergence. Since the convergence speed of Adam also depends on the selection of the batch size and learning rate, we created a test grid that comprised combinations of several values of these hyperparameters\footnote{Batch size: [128, 256, 512, 1024]. Learning rate: [0.01, 0.05, 0.10]}. 

The results are illustrated in Fig. \ref{fig:speed_test}. As can be seen, the modified EM algorithm is not necessarily faster than Adam. However, the fact that the abundance of hyperparameter combinations that result in slower convergences makes the modified EM algorithm more favourable thanks to the absence of hyperparameters. Moreover, please note that the results are given in the logarithmic scale. Thus, the convergence time of Adam is significantly longer than the modified EM for most combinations, while the difference is negligible when Adam is faster.
\begin{figure}
    \includegraphics[width=1\linewidth]{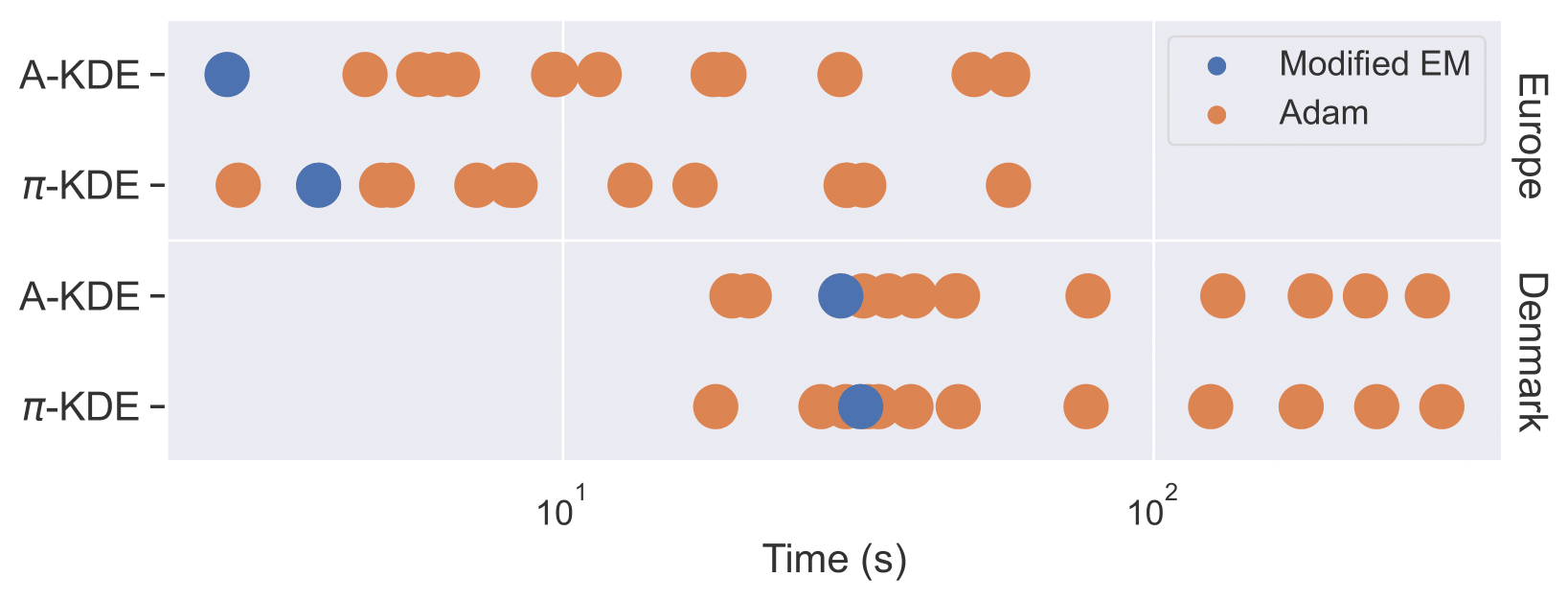}
    \vspace{-0.5cm}
    \caption{Convergence times of the modified EM and Adam.}
    \label{fig:speed_test}
\end{figure}

\subsection{Estimation Performance Comparison}

We trained all candidate models for each dataset. The empirical cumulative distribution functions (ECDFs) of the resulting sample test scores ($\mathcal{S}_{\text{T}^\text{s}}^\text{M}$ and $\mathcal{S}_{\text{T}^\text{s}}^\text{base}$) after the Monte Carlo runs are depicted in Fig. \ref{fig:scores}. The corresponding model comparison scores ($\mathcal{S}_{\text{T}^\text{s},\text{T}^\text{m}}^\text{M}$) are tabulated in Table \ref{tab:exp_result}. The final column indicates if the model is subject to singularity prevention.

In Fig. \ref{fig:scores}, we can see that the sample test scores of the models ($\mathcal{S}^\text{M}_{\text{T}^\text{s}}$) have similar variances with respect to their corresponding baseline sample test scores ($\mathcal{S}^\text{base}_{\text{T}^\text{s}}$). This eases the visual inspection by looking at the ordering of the ECDFs. Since we hypothesise that a lower score means better performance and that train and test data are drawn from the same distribution, having the baseline ECDFs on the leftmost for all orderings confirms our intuition.

First, we see from Table \ref{tab:exp_result} that A-KDE is consistently inferior to the $\pi$-KDE model. This supports the motivation of introducing kernel weights described in Section \ref{subs:loo_pi-kde}.

The similarity of distributions in Fig. \ref{fig:scores} suggests that all tested models are able to adequately represent the Denmark dataset. We interpret this as a result of the lower dimensionality and larger dataset size with respect to the Europe dataset. Nonetheless, $\pi$-KDE is more desirable in practice thanks to its singularity prevention guarantees. GMMs lack this prevention, and it is likely to encounter singularities, which we experienced occasionally during our experimentation.

For the Europe dataset, the best scores are obtained by the GMM$_\pi$ model. However, the $\pi$-KDE model also shows acceptable performance for this dataset. Qualitatively, we can see from Fig. \ref{fig:samples} that the complex nature of the marginals and `2-way marginals' of the selected countries are captured by the generated data. This makes the $\pi$-KDE model more favourable in cases where singularity prevention is crucial, like in edge-computing, in which the computation power is limited and re-running a failed GMM optimization might be costly in terms of time and energy. Similarly, (near) real-time applications might also require this singularity prevention due to the limited time budget to re-run a failed optimization attempt.

\begin{figure}[]
\centering
\subfloat[]{\label{subfig:scores_europe}\includegraphics[width=\linewidth]{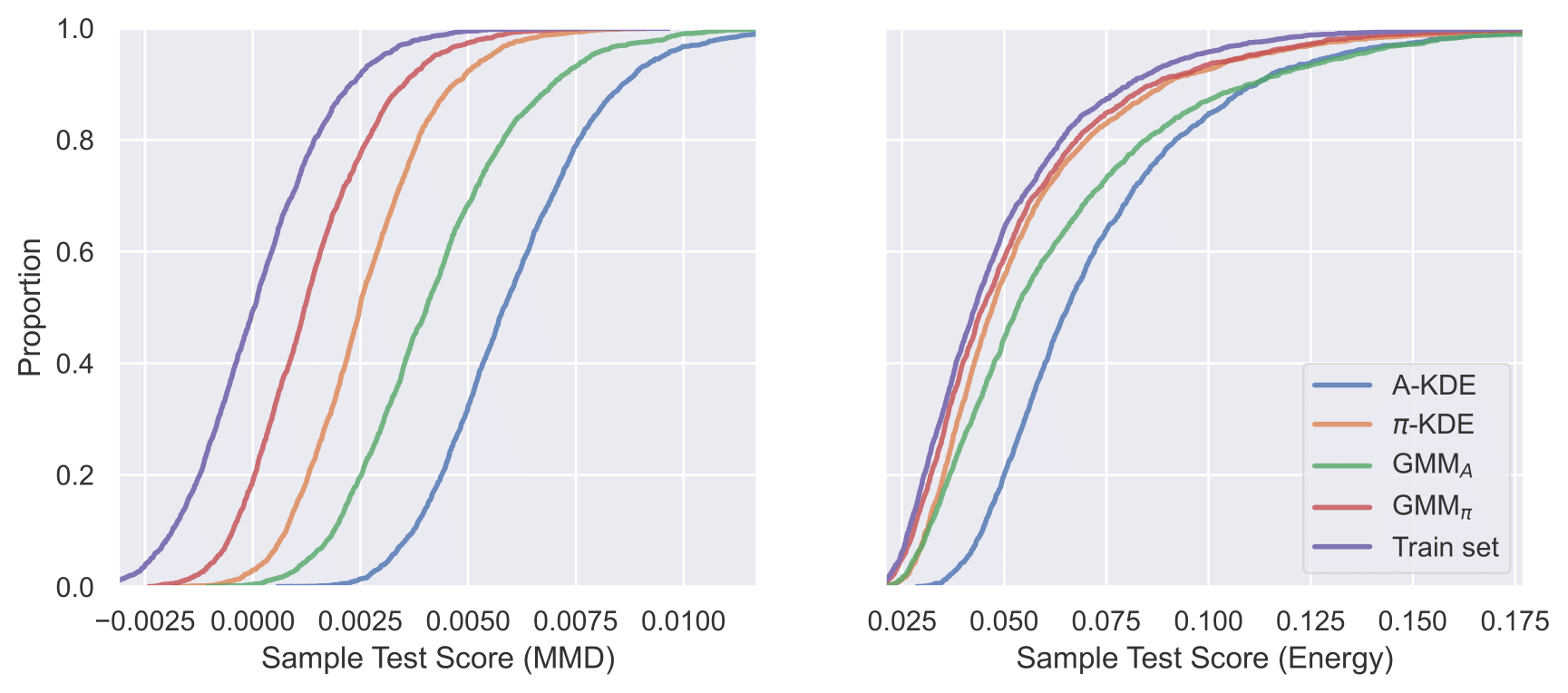}} \\
\subfloat[]{\label{subfig:scores_denmark}\includegraphics[width=\linewidth]{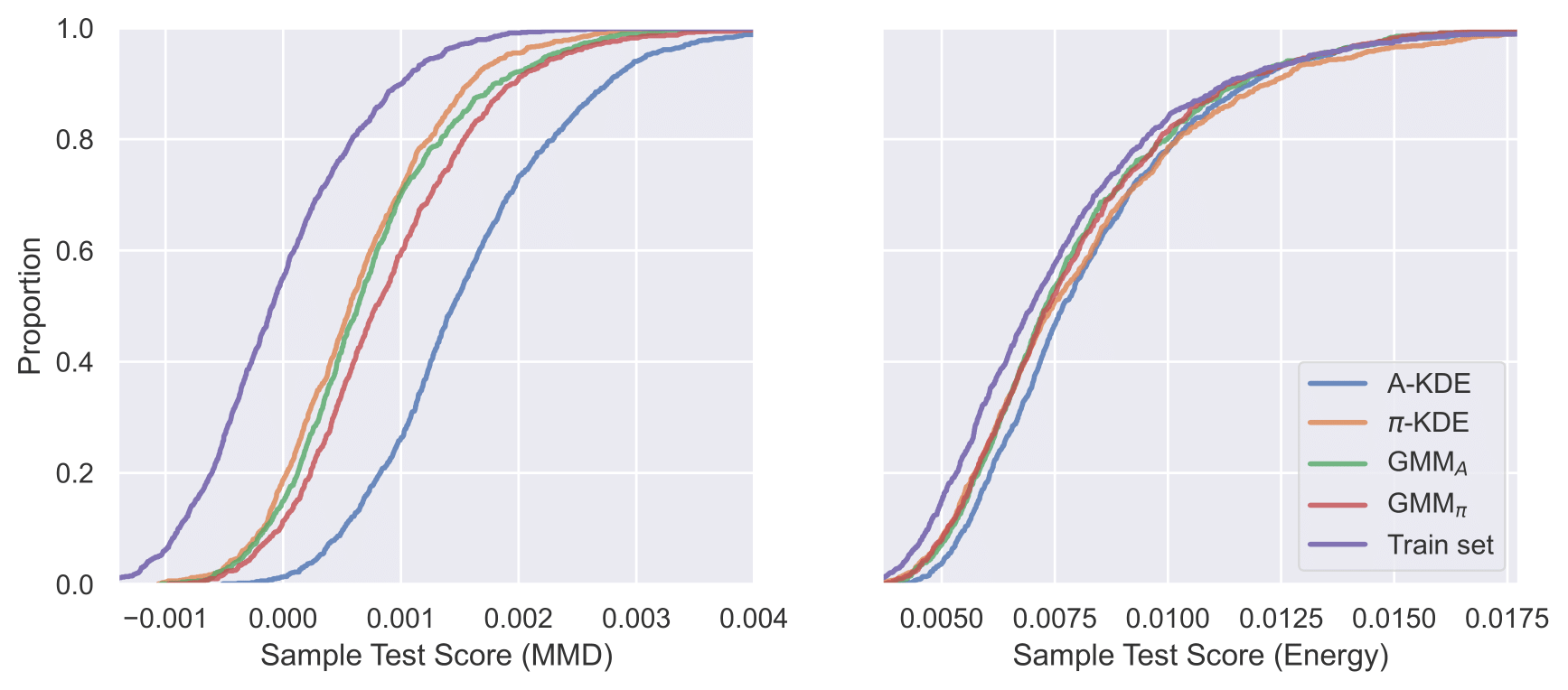}}%
\caption{ECDFs of sample test scores for (a) Europe and (b) Denmark datasets.}
\label{fig:scores}
\end{figure}

\begin{table}[t!]
\centering
\caption{Model Comparison Scores ($\mathcal{S}_{\text{T}^\text{s},\text{T}^\text{m}}^\text{M}$)}
\label{tab:exp_result}
\resizebox{\columnwidth}{!}{%
\begin{tabular}{@{}ccccccc@{}}
\toprule
\multirow{2}{*}{Dataset} & \multirow{2}{*}{\begin{tabular}[c]{@{}c@{}}Sample \\ Comparison \\ Test (T$^\text{s}$)\end{tabular}} & \multirow{2}{*}{Model}              & \multicolumn{3}{c}{Model Comparison Test (T$^\text{m}$)}     & \multicolumn{1}{l}{\multirow{2}{*}{\begin{tabular}[c]{@{}l@{}}Singularity\\ Prevention\end{tabular}}} \\ \cmidrule(lr){4-6}
                         &                                                                                                      &                                     & \begin{tabular}[c]{@{}c@{}}KS\\ $\times$10$^\text{-1}$\end{tabular}  & \begin{tabular}[c]{@{}c@{}}CvM \\ ~\end{tabular}           & \begin{tabular}[c]{@{}c@{}}$\Delta$Mean\\ $\times$10$^\text{-3}$\end{tabular} & \multicolumn{1}{l}{}                                                                                  \\ \midrule
\multirow{8}{*}{Europe}  & \multirow{4}{*}{MMD}                                                                                 & \multicolumn{1}{c|}{A-KDE}          & 9.16            & 317.88         & 5.91                      & \textbf{Yes}                                                                                          \\
                         &                                                                                                      & \multicolumn{1}{c|}{$\pi$-KDE}      & 5.82            & 167.79         & 2.50                      & \textbf{Yes}                                                                                          \\
                         &                                                                                                      & \multicolumn{1}{c|}{GMM$_\text{A}$} & 7.63            & 256.11         & 4.10                      & No                                                                                                    \\
                         &                                                                                                      & \multicolumn{1}{c|}{GMM$_\pi$}      & \textbf{3.08}   & \textbf{52.45} & \textbf{1.25}             & No                                                                                                    \\ \cmidrule(l){2-7} 
                         & \multirow{4}{*}{Energy}                                                                              & \multicolumn{1}{c|}{A-KDE}          & 4.46            & 95.96          & 23.99                     & \textbf{Yes}                                                                                          \\
                         &                                                                                                      & \multicolumn{1}{c|}{$\pi$-KDE}      & 1.24            & 6.95           & 5.97                      & \textbf{Yes}                                                                                          \\
                         &                                                                                                      & \multicolumn{1}{c|}{GMM$_\text{A}$} & 2.02            & 21.96          & 14.30                     & No                                                                                                    \\
                         &                                                                                                      & \multicolumn{1}{c|}{GMM$_\pi$}      & \textbf{0.56}   & \textbf{1.27}  & \textbf{3.34}             & No                                                                                                    \\ \midrule
\multirow{8}{*}{Denmark} & \multirow{4}{*}{MMD}                                                                                 & \multicolumn{1}{c|}{A-KDE}          & 6.84            & 111.03         & 1.58                      & \textbf{Yes}                                                                                          \\
                         &                                                                                                      & \multicolumn{1}{c|}{$\pi$-KDE}      & \textbf{3.79}   & \textbf{36.23} & \textbf{0.65}             & \textbf{Yes}                                                                                          \\
                         &                                                                                                      & \multicolumn{1}{c|}{GMM$_\text{A}$} & 4.19            & 42.52          & 0.75                      & No                                                                                                    \\
                         &                                                                                                      & \multicolumn{1}{c|}{GMM$_\pi$}      & 4.74            & 55.41          & 0.91                      & No                                                                                                    \\ \cmidrule(l){2-7} 
                         & \multirow{4}{*}{Energy}                                                                              & \multicolumn{1}{c|}{A-KDE}          & 1.57            & 5.51           & 0.68                      & \textbf{Yes}                                                                                          \\
                         &                                                                                                      & \multicolumn{1}{c|}{$\pi$-KDE}      & 0.95            & 2.24           & 0.57                      & \textbf{Yes}                                                                                          \\
                         &                                                                                                      & \multicolumn{1}{c|}{GMM$_\text{A}$} & 1.06            & 1.77           & \textbf{0.34}             & No                                                                                                    \\
                         &                                                                                                      & \multicolumn{1}{c|}{GMM$_\pi$}      & \textbf{0.93}   & \textbf{1.67}  & \textbf{0.34}             & No                                                                                                    \\ \bottomrule
                                                                                                                                                                                
\end{tabular}%
}
\end{table}

\section{Conclusion and Future Work}

Probabilistic modelling of power systems data is crucial for the future of systems operation and planning. This work introduced the data-copying phenomenon, which burdens such modelling by causing singular solutions. KDE-based models are employed to investigate this effect in a mathematically rigorous way. LOO-MLL criterion is proposed as a solution, and singularity prevention is guaranteed for two KDE-based models. Moreover, a modified EM optimization procedure is proposed for reliable training of the models. The models, along with benchmark GMM models, are tested on two power systems datasets using a novel testing procedure. The results show that the proposed models have an adequate modelling performance besides having singularity prevention guarantees.

Even though the KDE-based models are convenient for mathematical analysis, their applicability is limited since the number of kernels can easily be overwhelming for large datasets. A pruning mechanism might be the solution for this. Also, the isotropic nature of kernels can result in noisy samples if the data lies in a lower dimensional manifold. We plan to extend this work to kernels with full-covariance matrices. Lastly, as mentioned before, singularity caused by data-copying is a common problem in more advanced models too \cite{meehan2020non}, such as GMMs and variational autoencoders, so it is appealing to use the LOO-MLL in these models in future.

\bibliographystyle{IEEEtran}
\bibliography{refs.bib} 

\appendices

\section{Proof of Data-copying} \label{app:data_copy_proof}
\begin{proof} ($\Rightarrow$) Let $\sigma_{j'}\rightarrow 0^+$ while the rest of the bandwidths, $\{\sigma_j \}_{j\neq j'}$ have non-zero values. Plugging these into the scaled total log-likelihood without taking the limit results in
\begin{equation}
\begin{split}
& \sum_i \log \sum_{j} \mathcal{N}(\mathbf{x}_i;\mathbf{x}_j,\sigma^2_j\mathbf{I}) \stackrel{\text{c}}{=} \sum_i \log \left[ \frac{1}{\sigma_{j'}^d} \exp\left(\frac{-\Delta^2_{ij'}}{2\sigma^2_{j'}} \right)  + c_i\right] \\ &=
\log \left[ \frac{1}{\sigma_{j'}^d}  + c_{j'}\right] + \sum_{i\neq j'} \log \left[ \left(\frac{1}{\sigma_{j'}^2}\right)^{\frac{d}{2}} \exp\left(\frac{-\Delta^2_{ij'}}{2\sigma^2_{j'}} \right)  + c_i\right]
\end{split}
\end{equation}
where $c_i\coloneqq\sum_{j\neq j'}\mathcal{N}(\mathbf{x}_i;\mathbf{x}_j,\sigma^2_j\mathbf{I})$, $\Delta_{ij}\coloneqq\lVert \mathbf{x}_i - \mathbf{x}_j \rVert$ and $\stackrel{\text{c}}{=}$ means equal up to a constant. Taking the limit $\sigma_{j'}\rightarrow 0^+$, we get
\begin{equation}
    \begin{split}
        &\underset{\sigma_{j'}\rightarrow 0^+}{\lim} \sum_i \log \sum_{j} \mathcal{N}(\mathbf{x}_i;\mathbf{x}_j,\sigma^2_j\mathbf{I}) = \\ &
        -d  \underset{\sigma_{j'}\rightarrow 0^+}{\lim} \log(\sigma_{j'}) + \sum_{i\neq j'} \log(0+c_i) \rightarrow \infty
    \end{split}
\end{equation}
assuming there are no replica points in the dataset, i.e. $\Delta_{ij'}\neq 0, \forall i$. Otherwise, the second term also goes to infinity. In both cases, the total limit diverges to infinity, concluding the proof of the only if part.

($\Leftarrow$) Let us assume the opposite of the conclusion, i.e. $\exists\varepsilon>0: \sigma_j>\varepsilon, \forall j$. This makes the likelihoods of the datapoints under every kernel finite, i.e. $\exists c\in\mathbb{R}^+: \mathcal{N}(\mathbf{x}_i|\mathbf{x}_j,\sigma^2_j\mathbf{I}) \leq c, \forall i,j$. This also results in an upper-bounded total log-likelihood
\begin{equation}
    \frac{1}{N} \sum_i \log \sum_{j} \mathcal{N}(\mathbf{x}_i;\mathbf{x}_j,\sigma^2_j\mathbf{I}) \leq \log Nc
\end{equation}
which contradicts the initial statement and concludes the proof.
\end{proof}

\section{Proof of Data-copying Prevention by LOO-MLL}\label{app:loo_wellbehave}

\begin{proof}
    The assumption that there are no repeating data points implies $\min_{\{i,j:i\neq j\}}(\lVert \mathbf{x}_i - \mathbf{x}_j \rVert) > 0$. The remainder of the proposition can be formulated as 
    \begin{equation}
    \begin{split}
        \{\sigma_j^*\}_j =& ~\underset{\{\sigma_j\}_j}{\text{argmax}} \sum_i \log \sum_{j\neq i} \mathcal{N}(\mathbf{x}_i;\mathbf{x}_j,\sigma^2_j\mathbf{I}) \\
        &  \Longrightarrow \exists\varepsilon>0: \sigma_j^*\geq\varepsilon, \forall j
    \end{split}
    \end{equation}
    and express the negation as
    \begin{equation}\label{eq:proofB_contr}
    \begin{split}
        &\forall\varepsilon>0: \exists j: \sigma_j^*<\varepsilon ~ \wedge \\
        &  \left.g_j\right|_{\{\sigma^*_k\}_k} \vcentcolon= \left. \frac{\partial \sum_i \log \sum_{k\neq i} \exp(f_{ik})}{\partial \sigma_j}\right|_{\{\sigma^*_k\}_k}\leq0,
    \end{split}
    \end{equation}
    where $f_{ik}=-d\log \sigma_k - \frac{\Delta^2_{ik}}{2\sigma_k^2}$ and $\Delta_{ik}=\lVert \mathbf{x}_i - \mathbf{x}_k \rVert$. If we prove that \eqref{eq:proofB_contr} results in a contradiction, it concludes the proof. Note that the inequality in the (local) optimality condition covers the candidate solutions on the boundaries.

    The gradient expression $g_j$ can be derived as
    \begin{equation}\label{eq:g_j}
        g_j = \sum_{i\neq j} \left(\frac{\Delta^2_{ij}}{\sigma^3_j} - \frac{d}{\sigma_j} \right) w_{ij}
    \end{equation}
    where $w_{ij}=\frac{\exp(f_{ij})}{\sum_{k\neq i}\exp(f_{ik})}\in(0,1)$. Let us assume that $\sigma^*_{j'}<\varepsilon$ and express $g^*_{j'}=\left.g_{j'}\right|_{\{\sigma^*_k\}_k}\leq0$ using \eqref{eq:g_j} as
    \begin{equation}
        \begin{split}
        &\frac{1}{\varepsilon^2}<\frac{1}{\sigma^{*^2}_{j'}}\leq \frac{d \sum_{i\neq j'} w^*_{ij'}}{\sum_{i\neq j'}\Delta^2_{ij'} w^*_{ij'}} = \frac{d}{\sum_{i\neq j'} \frac{\Delta^2_{ij'}}{1+\sum_{l\neq i,j'}w_{lj'}}} \\
        & < \frac{d(N-1)}{\sum_{i\neq j'}\Delta^2_{ij'}} < \frac{d(N-1)}{N\min(\{\Delta^2_{ij'}\}_{i\neq j'})}.
        \end{split}
    \end{equation}
    Thus, the optimality statement in \eqref{eq:proofB_contr} takes the form of  
    \begin{equation}
        \forall\varepsilon>0: \exists j: \frac{N}{d(N-1)} \min(\{\Delta^2_{ij}\}_{i\neq j}) < \varepsilon^2.
    \end{equation}
    This is a contradiction as long as the non-repeating data point assumption ($\min_{\{i,j:i\neq j\}}(\lVert \mathbf{x}_i - \mathbf{x}_j \rVert) > 0$) holds.
    
\end{proof}

\end{document}